\documentclass[10pt,letterpaper]{article}
\usepackage{times}
\usepackage{ijcai20}
\usepackage[hidelinks]{hyperref}
\usepackage[small]{caption}
\bibliographystyle{named}
\usepackage{xparse}
\usepackage{amsmath}
\usepackage{amsthm}
\usepackage{amssymb}
\usepackage{xspace}
\usepackage{relsize}
\usepackage{graphicx}
\usepackage{url}
\usepackage{math_commands}
\usepackage{adjustbox}

\newtheorem{theo}{Theorem}

\makeatletter
\let\@myref\ref

\newcommand{\refsec}[1]{Sec.\,\@myref{#1}}
\newcommand{\refseq}[1]{Sec.\,\@myref{#1}}
\newcommand{\refig}[1]{Figure\,\@myref{#1}}
\newcommand{\refigs}[2]{Figure\,\@myref{#1}-\@myref{#2}}

\newcommand{\reftbl}[1]{Table \@myref{#1}}
\newcommand{\refstep}[1]{Step \@myref{#1}}
\newcommand{\refalgo}[1]{Algorithm \@myref{#1}}
\newcommand{\refline}[1]{line \@myref{#1}}
\newcommand{\refchap}[1]{Chapter \@myref{#1}}
\newcommand{\reflst}[1]{List \@myref{#1}}
\newcommand{\refeq}[1]{Eq. \@myref{#1}}

\newcommand{\refex}[1]{Example \@myref{#1}}

\makeatother

\usepackage{stackengine}
\stackMath
\newcommand\tsup[2][2]{%
 \def\useanchorwidth{T}%
  \ifnum#1>1%
    \stackon[-.5pt]{\tsup[\numexpr#1-1\relax]{#2}}{\scriptscriptstyle\sim}%
  \else%
    \stackon[.5pt]{#2}{\scriptscriptstyle\sim}%
  \fi%
}

\newcommand{\brackets}[1]{{\left<#1\right>}}
\newcommand{\braces}[1]{{\left\{#1\right\}}}
\newcommand{\parens}[1]{{\left(#1\right)}}

\newcommand{\lsota}{state-of-the-art\xspace}  
\newcommand{\domind}{domain-independent\xspace}

\newcommand{\astar}{\xspace {$A^*$}\xspace}

\newcommand{\pddl}[1]{\textsf{\small #1}}

\def\_{\\[-0.3em]}

\makeatletter

\newcommand{\newheuristic}[2]{%
 \def#1{%
  \ifmmode%
  h^\text{#2}\xspace%
  \else%
  \text{#2}\xspace%
  \fi%
 }%
}

\newheuristic{\lmcut}{LMcut}
\newheuristic{\mands}{M\&S}
\newheuristic{\pdb}{PDB}
\newheuristic{\ff}{FF}
\newheuristic{\ce}{CEA}
\newheuristic{\cg}{CG}
\newheuristic{\ad}{add}
\newheuristic{\lc}{LC}
\newheuristic{\hmax}{max}

\newcommand{\newUnitCostHeuristic}[2]{%
 \def#1{%
  \ifmmode%
  \hat{h}^\text{#2}\xspace%
  \else%
  \text{#2}\xspace%
  \fi%
 }%
}

\newUnitCostHeuristic{\lmcuto}{LMcut}
\newUnitCostHeuristic{\mandso}{M\&S}
\newUnitCostHeuristic{\ffo}{FF}
\newUnitCostHeuristic{\ceo}{CEA}
\newUnitCostHeuristic{\cgo}{CG}
\newUnitCostHeuristic{\ado}{add}
\newUnitCostHeuristic{\gco}{gc}
\newUnitCostHeuristic{\lco}{LC}

\makeatother

\def\latentplanner{Latplan\xspace}

\renewcommand{\to}{\rightarrow}

\renewcommand{\iff}{\Leftrightarrow}

\newcommand{\function}[1]{\textsc{#1}}

\newcommand{\init}{{\vx^{I}}}
\newcommand{\goal}{{\vx^{G}}}
\newcommand{\zinit}{{\vz^{I}}}
\newcommand{\zgoal}{{\vz^{G}}}
\newcommand{\encode}{\function{Enc}}
\newcommand{\decode}{\function{Dec}}

\newcommand{\Tr}{X}

\newcommand{\defaultindex}{i}
\newcommand{\tr}[1][\defaultindex]{\mathsf{x}^{#1}}
\newcommand{\ztr}[1][\defaultindex]{\mathsf{z}^{#1}}
\newcommand{\xbefore}[1][\defaultindex,0]{\vx^{#1}}
\newcommand{\xafter}[1][\defaultindex,1]{\vx^{#1}}
\newcommand{\xbeforerec}[1][\defaultindex,0]{{\tsup[1]{\vx}}^{#1}}
\newcommand{\xafterrec}[1][\defaultindex,1]{{\tsup[1]{\vx}}^{#1}}

\newcommand{\xafterrecrec}[1][\defaultindex,1]{{\tsup[2]{\vx}}^{#1}}
\newcommand{\zbefore}[1][\defaultindex,0]{\vz^{#1}}
\newcommand{\zafter}[1][\defaultindex,1]{\vz^{#1}}
\newcommand{\zafterrec}[1][\defaultindex,1]{\tsup[1]{\vz}^{#1}}
\newcommand{\action}[1][\defaultindex]{\va^{#1}}

\newcommand{\aaee}{\function{action}}
\newcommand{\aaed}{\function{apply}}

\newcommand{\pre}[1]{\function{pre}\parens{#1}}
\newcommand{\adde}[1]{\function{add}\parens{#1}}
\newcommand{\dele}[1]{\function{del}\parens{#1}}
\newcommand{\effect}[1]{\function{effect}\parens{#1}}

\newcommand{\BN}[1]{\function{BN}\parens{#1}}

\newcommand{\GS}{\function{GS}}

\hyphenation{bar-man air-port cyber-sec driver-log floor-tile free-cell
no-mystery open-stacks parc-printer path-ways peg-sol pipes-world
no-tankage scan-alyzer soko-ban tidy-bot visit-all zeno-travel tie-break-ing}

\frenchspacing
\setlength{\pdfpagewidth}{8.5in}
\setlength{\pdfpageheight}{11in}
\setcounter{secnumdepth}{2}

\author{
Masataro Asai$^1$ \And
Christian Muise$^2$
\affiliations
$^1$ MIT-IBM Watson AI Lab, IBM Research, Cambridge USA\\
$^2$ School of Computing, Queen's University, Kingston Canada
}

\title{Learning Neural-Symbolic Descriptive Planning Models\\ via Cube-Space Priors: The Voyage Home (to STRIPS)}

\pdfinfo{
/Title (Learning Neural-Symbolic Descriptive Planning Models via Cube-Space Priors: The Voyage Home (to STRIPS))
/Keywords (Neural Symbolic Systems, Planning, Heuristic Search, Deep Learning, Symbol Grounding, Action Model Acquisition)
}

\begin{document}
\maketitle

\begin{abstract}
We achieved a new milestone in the difficult task of enabling agents to learn
about their environment autonomously.
Our neuro-symbolic architecture is trained end-to-end to produce a
succinct and effective discrete state transition model from images
alone. Our target representation (the Planning Domain Definition
Language) is already in a form that off-the-shelf solvers can consume,
and opens the door to the rich array of modern heuristic search capabilities.
We demonstrate how the sophisticated innate prior we
place on the learning process significantly reduces the complexity of the learned
representation, and reveals a connection to the
graph-theoretic notion of ``cube-like graphs'', thus opening the door to a
deeper understanding of the ideal properties for learned
symbolic representations.
We show that the powerful domain-independent heuristics allow our system
to solve visual 15-Puzzle instances
which are beyond the reach of blind search,
without resorting to the Reinforcement Learning approach
that requires a huge amount of training on the domain-dependent reward information.
\end{abstract}

\textit{Extended version: This manuscript is slightly extended to include more discussion in the related works section.}

\section{Introduction}
\label{sec:introduction}

Learning a symbolic and descriptive transition model of an environment from unstructured and noisy input (e.g. images) is a major challenge in Neural-Symbolic integration. Doing so in an unsupervised manner requires solving both the Symbol Grounding \cite{taddeo2005solving} and the Action Model Learning/Acquisition problem, and is particularly difficult without reusing manually defined symbols.

Recent work that learns the discrete planning models from images has
opened a direction for applying the symbolic Automated Planning systems to the raw, noisy data \cite[Latplan]{Asai2018}.
The system builds on a bidirectional mapping between the visual perceptions and the propositional states; using separate networks for modeling action applicability and effects.
Latplan opened the door to applying a variety of interesting symbolic
methods to real world data.
E.g., its search space
was shown to be compatible with symbolic Goal Recognition 
\cite{amado2018goal}.

\begin{figure*}[tb]
 \center
 \includegraphics[width=0.9\linewidth]{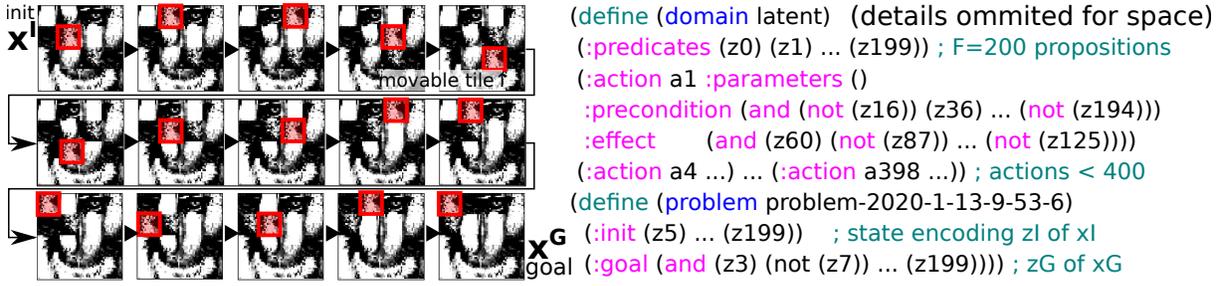}
 \caption{
(left) A 14-step optimal plan for a 15-puzzle instance generated by our system using off-the-shelf Fast Downward with $\lmcut$
using the PDDL generated by our system.
(right) The intermediate PDDL output from our NN-based system.
}
\label{15puzzle}
\end{figure*}

One major drawback of the previous work was that it used a non-descriptive, black-box neural model as the successor generator.
Not only that such a black-box model is incompatible with the existing heuristic search techniques,
but also, since a neural network is able to model a very complex function,
its direct translation into a compact logical formula
via a rule-based transfer learning method turned out futile \cite{Asai2019d}:
The model complexity causes an exponentially large grounded action model that cannot be processed by the modern classical planners.
Thus, \emph{obtaining the descriptive action models from the raw observations with minimal human interference} is
the next key milestone for expanding the scope of applying Automated Planning to the raw unstructured inputs.

We propose the \emph{Cube-Space AutoEncoder} (Cube-Space AE) neuro-symbolic architecture,
which addresses the complexity in the action model
by jointly learning a state representation and an action model of a \emph{restricted} class.
The purpose of the joint training is to 
learn an appropriate latent/state space where low-complexity action models exist,
exploiting the flexibility of the end-to-end training of neural networks (NNs).
The action models searched by the NN are restricted
to a graph class called \emph{directed cube-like graph} that
corresponds precisely to the STRIPS semantics.
Cube-Space AE allows us to directly extract the action effects for each action,
providing a grounded PDDL \cite{pddlbook} that is immediately usable by off-the-shelf planners.
We demonstrate its planning performance on visual puzzle domains including
15 Puzzle instances (\refig{15puzzle}).
Remarkably, the vast majority of states seen in the solution plans \textit{do not
appear in the training data}, i.e.,
the learned representation captures the underlying dynamics
and generalizes extremely well to the individual domains studied.

\section{Preliminaries}
\label{preliminary}

We denote a multi-dimensional array in bold and
its elements with a subscript (e.g., $\vx\in \R^{N\times M}$, $\vx_2 \in \R^M$),
an integer range $n<i<m$ by $n..m$,
and the $i$-th data point of a dataset by a superscript $^{i}$ which we may omit for clarity.
Functions (e.g. $\log,\exp$) are applied to the arrays element-wise.

Let $\mathcal{F}(V)$ be a propositional formula consisting of
 logical operations $\braces{\land,\lnot}$,
 constants $\braces{\top,\bot}$, and
 a set of propositional variables $V$.
We define a grounded (propositional) STRIPS Planning problem
as a 4-tuple $\brackets{P,A,I,G}$
where
 $P$ is a set of propositions,
 $A$ is a set of actions,
 $I\subseteq P$ is the initial state, and
 $G\subseteq P$ is a goal condition.
Each action $a\in A$ is a 3-tuple $\brackets{\pre{a},\adde{a},\dele{a}}$ where
 $\pre{a} \in \mathcal{F}(P)$ is a precondition and
 $\adde{a}$, $\dele{a}$ are the add-effects and delete-effects, where $\adde{a} \cap \dele{a} = \emptyset$.
A state $s\subseteq P$ is a set of true propositions,
an action $a$ is \emph{applicable} when $s$ \emph{satisfies} $\pre{a}$,
and applying an action $a$ to $s$ yields a new successor state
$a(s) = (s \setminus \dele{a}) \cup \adde{a}$.

\latentplanner is a framework for
\emph{domain-independent image-based classical planning} \cite{Asai2018}.
It learns the state representation and the transition rules
entirely from image-based observations of the environment with deep neural networks
and solves the problem using a classical planner.

\latentplanner is trained on a \emph{transition input} $\Tr$: a set of pairs of raw data randomly sampled from the environment.
The $\defaultindex$-th pair in the dataset $\tr=(\xbefore, \xafter) \in \Tr$ is
a randomly sampled transition from a environment observation $\xbefore$ to another observation $\xafter$ as the result of some unknown action.
Once trained, \latentplanner can process the \emph{planning input} $(\init, \goal)$, a pair of raw data images
 corresponding to an initial and goal state of the environment.
The output of \latentplanner is a data sequence representing the plan execution
 $(\init,\ldots \goal)$ that reaches $\goal$ from $\init$.
While the original paper used an image-based implementation, 
conceptually any form of temporal data that can be auto-encoded to a learned representation is viable for this methodology.

\latentplanner works in 3 steps.
In Step 1, a \emph{State AutoEncoder} (SAE) (\refig{three}, left) neural network learns a bidirectional mapping between raw data $\vx$ (e.g., images)
 and propositional states $\vz\in\braces{0,1}^F$, i.e., the $F$-dimensional bit vectors.
The network consists of two functions $\encode$ and $\decode$, where 
$\encode$ encodes an image $\vx$ to $\vz=\encode(\vx)$, and $\decode$ decodes $\vz$ back to an image $\tsup[1]{\vx}=\decode(\vz)$.
The training is performed by minimizing the reconstruction loss $||\tsup[1]{\vx}-\vx||$ under some norm (e.g., Mean Square Error for images).

In order to guarantee that $\vz$ is a binary vector, the network must use a differentiable discrete activation function
such as Heaviside $\function{step}$ Function with straight-through estimator \cite{koul2018learning,bengio2013estimating} or Gumbel Softmax (GS) \cite{jang2016categorical},
which we use for its superior accuracy \cite[Table 3]{jang2016categorical}.
GS is an annealing-based continuous relaxation of $\argmax$ (returns a 1-hot vector), defined as
 $\GS(\vx)=\function{Softmax}((\vx-\log(-\log \vu)))/\tau)$,
and its special case limited to 2 categories \cite{MaddisonMT17} is
$\function{BinConcrete}(\vx)=\function{Sigmoid}((\vx+\log \vu-\log(1-\vu))/\tau)$,
where $\vu$ is a vector sampled from $\function{Uniform}(0,1)$ and $\tau$ is an annealing parameter.
As $\tau\rightarrow 0$, both functions approach to discrete functions:
$\GS(\vx)\to\argmax(\vx)$
and $\function{BinConcrete}(\vx)\to\function{step}(\vx)$. 

The mapping $\encode$ from $\braces{\ldots\xbefore, \xafter\ldots}$ to $\braces{\ldots\zbefore, \zafter\ldots}$
provides the propositional transitions $\ztr=(\zbefore,\zafter)$.
In Step 2, an Action Model Acquisition (AMA) method learns an action model from $\ztr$.
In Step 3, a planning problem instance $(\zinit, \zgoal)$ is generated from the planning input $(\init, \goal)$
and the classical planner finds the path connecting them.
In the final step, \latentplanner obtains a step-by-step, human-comprehensible visualization of the plan execution
by $\decode$'oding the intermediate states of the plan into images.
For evaluation, we use domain-specific validators for the visualized results
because the representation learned by unsupervised learning is not directly verifiable.

The original Latplan paper proposed two approaches for AMA.
AMA$_1$ is an oracular model that directly generates a PDDL without learning,
and AMA$_2$ is a neural model that approximates AMA$_1$ by learning from examples, which we mainly discuss.
AMA$_2$ consists of two neural networks: Action AutoEncoder (AAE) and Action Discriminator (AD).

AAE (\refig{three}, middle) is an autoencoder that learns to cluster the state transitions into a (preset) finite number of action labels.
Its encoder, $\aaee(\zbefore,\zafter)$, takes a propositional state pair $(\zbefore,\zafter)$ as the input
and returns an action.
The last layer of the encoder is activated by a categorical activation function (Gumbel Softmax) to become a one-hot vector of $A$ categories,
 $\action\in\braces{0,1}^A$ ($\sum_{j=1}^{j=A} \action_j = 1$), where 
$A$ is a hyperparameter for the maximum number of action labels and $\action$ represents an action label.
For clarity, we use the one-hot vector $\action$ and the index $a^\defaultindex=\arg \max \action$ interchangeably.
AAE's decoder takes the current state $\zbefore$ and $\action$ as the input and outputs $\zafterrec$,
acting as a progression function $\aaed(\action,\zbefore)=\zafterrec$.
AAE is trained to minimize the successor reconstruction loss $||\zafterrec-\zafter||$.

\begin{figure*}[tb]
 \centering
 \includegraphics[width=0.9\linewidth]{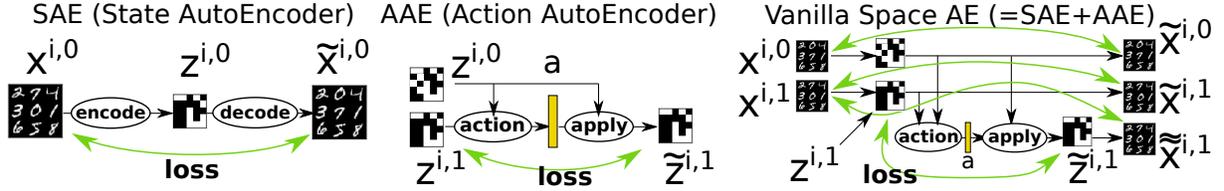}
 \caption{The illustration of State AutoEncoder, Action AutoEncoder, and the end-to-end combinations of the two.}
 \label{three}
\end{figure*}

AD is a PU-learning \cite{elkan2008learning} binary classifier that learns the preconditions of the actions
from the observed state transitions $P$ and the ``fake'' transitions $U$
sampled by applying random actions generated by the AAE.

Combining AAE and AD yields a successor function for graph search algorithms (e.g. \astar \cite{hart1968formal})
by enumerating the potential successor states $\zafter[a]=\function{apply}(\zbefore[],\action[])$ over $a\in 1..A$,
then prunes the states classified as invalid by the AD.
The major drawback of this approach is that both AAE and AD are black-box functions
incompatible with the standard PDDL-based planners and heuristics, and thus
requires custom blind search solvers.

\section{Cube-Space AutoEncoder}
\label{cube}

The issue in the mathematical model of AAE is that it allows for an arbitrarily complex state transition model.
The traditional STRIPS progression $\aaed(s,a)= (s \setminus \dele{a}) \cup \adde{a}$
\emph{disentangles} the effects from the current state $s$,
i.e., the effect $\adde{a}$ and $\dele{a}$ are defined entirely based on action $a$,
while AAE's black-box progression $\aaed(\action,\zbefore)$ does not offer such a separation,
allowing the model to learn arbitrarily complex conditional effects \cite{pddlbook}.
Due to this lack of separation,
the straightforward logical translation of the AAE with a rule-based learner
(e.g. Random Forest) requires a conditional effect for every effect bit,
resulting in a PDDL that cannot be processed by the modern classical planners due to the huge file size \cite{Asai2019d}.
In order to make the action effect independent from the current state,
we restrict the transition model learned by the network to a \emph{directed cube-like graph}.

A \emph{cube-like graph} $G(S,D)=(V,E)$ \cite{payan1992chromatic} is a simple undirected graph where
each node $v\in V$ is a finite set $v\subset S$,
$D$ is a family of subsets of $S$,
and for every edge $e = (v,w) \in E$, the symmetric difference
 $d = v\oplus w = (v\setminus w) \cup (w\setminus v)$ must belong to $D$.
For example, a unit cube is a cube-like graph because
$S=\braces{x,y,z},
V=\braces{\emptyset,\braces{x},\ldots \braces{x,y,z}},
E=\braces{(\emptyset,\braces{x}),\ldots (\braces{y,z},\braces{x,y,z})},
D=\braces{\braces{x},\braces{y},\braces{z}}$.
Since the set-based representation has a corresponding bit-vector, e.g.,
$V'=\braces{(0,0,0),(0,0,1),\ldots (1,1,1)}$,
we denote a one-to-one $F$-bit vector assignment as $f: V \to \braces{0,1}^F$.
Cube-like graphs have a key difference from normal graphs.

\begin{theo} 
 \emph{\textbf{(1, \cite{vizing1965chromatic})}} Let the edge chromatic number $c(G)$ of an undirected graph $G$ 
 be the number of colors in a minimum edge coloring.
 Then $c(G)$ is either $\Delta$ or $\Delta+1$, where $\Delta$ is the maximum degree of the nodes.
 \emph{\textbf{(2)}} Mapping $E\to D$ provides an edge coloring and thus $c(G)\leq \min_f |D|$.
 \emph{\textbf{(3a)}} The minimum number of actions required to model $G$ is $c(G)$ with conditional effects,
 \emph{\textbf{(3b)}} and $\min_f 2|D|$ without.
\end{theo}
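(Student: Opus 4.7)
The plan is to address the four sub-claims in sequence. Sub-claim (1) is Vizing's classical edge-coloring theorem, so I would simply cite the original and move on.

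For (2), I would verify that the assignment $\phi(e) := v \oplus w$, for each edge $e = (v,w) \in E$, is a proper edge coloring using at most $|D|$ colors. Suppose two edges $(u,v_1),(u,v_2)$ meeting at $u$ received the same label $d \in D$. Then $u \oplus v_1 = d = u \oplus v_2$, and taking symmetric difference with $u$ on both sides gives $v_1 = v_2$, contradicting distinctness. Hence $c(G) \leq |D|$; since this holds for every bit embedding $f$, we obtain $c(G) \leq \min_f |D|$.

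The lower bound in (3a) rests on a single structural observation: any STRIPS action $a$, with or without conditional effects, is a deterministic partial function on states, so the set $\{(v, a(v)) : v \satisfies \pre{a}\}$ is a matching in $G$. Covering $E$ by matchings therefore requires at least $c(G)$ actions. For the upper bound, I would fix any minimum edge coloring of $G$ and construct, per color class, a single action whose precondition enumerates the sources of that matching and whose conditional effects specify, at each such source, which bits in the class's symmetric difference to add and which to delete.

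The main work, and the main obstacle, is (3b). Without conditional effects, an action $a$ has fixed $\adde{a}, \dele{a}$; one checks that every induced edge $(v, a(v))$ satisfies $v \oplus a(v) = \adde{a} \cup \dele{a} =: d_a \in D$ with the add/delete partition of $d_a$ also fixed across all induced edges. Hence two edges sharing label $d$ but having different add/delete splits require distinct actions. In the canonical case $|d|=1$, the directed cube-like graph realizes both orientations of the flip, so each such $d$ demands exactly two actions, giving $2|D|$ in total; a matching construction with one action per direction per $d$ realizes the bound, and minimization over $f$ yields $\min_f 2|D|$. The delicate part here is arguing that both orientations must be realized and cannot be shared across distinct $d$'s, which follows because any action's $d_a$ equals its unique $\adde{a}\cup\dele{a}$.
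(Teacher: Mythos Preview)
Your argument for (2) is correct and mirrors the paper's, and your upper-bound construction for (3a) matches the paper's as well. But your lower-bound step for (3a) contains a real gap: the inference ``deterministic partial function $\Rightarrow$ matching'' is false. Determinism bounds only the out-degree of each vertex in the induced subgraph, not the in-degree; a single conditional-effect action on the star $K_{1,3}$ can send every leaf to the center, covering all three edges at once. In fact $\Delta$ conditional-effect actions always suffice to realize both orientations of every edge of $G$ (at each vertex $v$, distribute its $\deg(v)\le\Delta$ outgoing transitions arbitrarily among the $\Delta$ actions), so for any class-2 graph the value $c(G)=\Delta+1$ cannot serve as a lower bound. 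The paper's own proof of (3a) gives only the construction, i.e.\ the upper bound $\le c(G)$, and does not attempt the direction you add.

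There is also a smaller issue in (3b): the identity $v\oplus a(v)=\adde{a}\cup\dele{a}$ holds only when $\adde{a}\cap v=\emptyset$ and $\dele{a}\subseteq v$; in general $v\oplus a(v)=(\adde{a}\setminus v)\cup(\dele{a}\cap v)$, so a single unconditional STRIPS action can realize edges with \emph{different} symmetric differences, and your fixed $d_a$ need not lie in $D$. Restricting to $|d|=1$ sidesteps this but leaves the general claim open. The paper's proof of (3b) is the single line ``each $d\in D$ needs 2 actions for forward/backward directions'' and likewise does not engage with this subtlety.
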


\begin{proof}
\textbf{(2)} $f$ is one-to-one: $w\not=w'\iff f(w) \not= f(w')$.
For any set $X$, $f(w) \not= f(w') \iff X\oplus f(w) \not= X\oplus f(w')$.
For any adjacent edges $(v,w)$ and $(v,w')$,
$w\not=w'$ because $G$ is simple (at most one edge between any nodes), thus
$f(v)\oplus f(w) \not= f(v)\oplus f(w')$.
The remainder follows from the definition.
\textbf{(3a)} 
For each edge $(v,w)$ colored as $c$,
we add a conditional effect to $c$-th action using $f(v)$ and $f(w)$
as the conditions and the effects (see \refig{fig:star}, left).
\textbf{(3b)} Each $d\in D$ needs 2 actions for forward/backward directions.
\end{proof}

(1) and (3a,b) indicate that conditional effects can
compact as many edges as possible into just $A=\Delta$ or $\Delta+1$ actions
regardless of the nature of the transitions, while STRIPS effects cannot.
In (2), equality holds for hypercubes, and there are graph instances where $c(G) < \min_f 2|D|$ (\refig{fig:star}, right).
We ``proved'' this with an Answer Set Programming solver Potassco \cite{gebser2011potassco}.
(Detail omitted.)

\begin{figure}[tb]
 \centering
 \includegraphics[width=\linewidth]{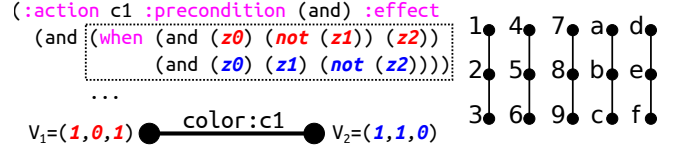}
 \caption{
(left): Single conditional effect can encode an arbitrary transition.
(right): This graph
 has $c(G)=2$.
 It has $F=4$ bit-vector unique assignments onto the nodes, but none satisfies $|D|=2$.
 An assignment with $|D|=4$ exists.
}
 \label{fig:star}
\end{figure}

Our NN architecture is based on a \emph{directed cube-like graph},
a directed extension of cube-like graph.
For every edge $e = (v,w) \in E$,
there is a pair of sets $d=(d^+,d^-)=(w \setminus v, v \setminus w)\in D$ which satisfies the asymmetric difference
$w=(v \setminus d^-) \cup d^+$
(similar to the planning representation, we assume $d^+ \cap d^- = \emptyset$).
It is immediately obvious that this graph class corresponds to the STRIPS action model without preconditions.
With preconditions, the state space is its subgraph.
This establishes an interesting theoretical connection between our innate
priors and the graph theoretic notion. We leave a formal analysis
of this connection to future work.

In order to constrain the learned latent state space of the environment,
we propose \emph{Cube-Space AutoEncoder}.
We first explain a vanilla \emph{Space AutoEncoder}, an architecture that
jointly learns the state and the action model
by combining the SAE and the AAE into a single network.
We then modify the $\aaed$ progression to form a cube-like state space.

The vanilla Space AutoEncoder (\refig{three}, right) connects the SAE and AAE subnetworks.
The necessary modification, apart from connecting them, is the change in loss function.
In addition to the loss for the successor prediction in the latent space,
we also ensure that the predicted successor state can be decoded back to the correct image $\xafter$.
Thus, the total loss is a sum of:
(1) the main reconstruction losses $||\xbefore-\xbeforerec||_2^2$ and $||\xafter-\xafterrec||_2^2$,
(2) the successor state reconstruction loss (\emph{direct loss}) $||\zafter-\zafterrec||_1$,
(3) the successor image reconstruction loss $||\xafter-\xafterrecrec||_2^2$,
and (4) the regularization loss.

An important technique for successfully training the system is to employ a bootstrap phase commonly used in the literature:
We delay the application of the direct loss until a certain epoch,
in order to address the relatively short network distance between the two latent spaces $\zafter/\zafterrec$ compared to $\xafter/\xafterrecrec$.
If we enable the direct loss from the beginning, the total loss does not converge because
$\zafterrec$ prematurely converges to $\zafter$ causing a mode-collapse (e.g. all 0),
before the image en/decoder learns a meaningful latent representation.

\begin{figure}[tb]
 \includegraphics[width=\linewidth]{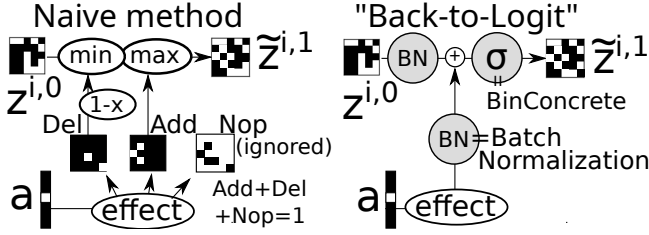}
 \caption{A naive and a Back-to-Logit implementation of the $\aaed$ module of the Cube-Space AE.}
 \label{fig:cube}
\end{figure}

Cube-Space AE modifies the $\aaed$ network so that it directly predicts the effects
 without taking the current state as the input, and logically computes the successor state
based on the predicted effect and the current state.
For example,
a naive implementation of such a network is shown in \refig{fig:cube} (left).
The \function{effect} network predicts a binary tensor of $F\times 3$,
which is activated by $F$-way Gumbel-Softmax of 3 categories.
Each Gumbel Softmax corresponds to one bit in the $F$-bit latent space
and 3 classes correspond to the add effect, delete effect and NOP,
only one of which is selected by the one-hot vector.
The effects are applied to the current state
either by a max/min operation or its smooth variants, $\smax(x,y)=\log (e^x+e^y)$ and $\smin(x,y)=-\smax(-x,-y)$.

While being intuitive, we found these naive implementations extremely difficult to train.
Our final contribution to the architecture is \emph{Back-to-Logit} (BtL, \refig{fig:cube}, right),
\emph{a generic approach that computes the logical operation in the continuous logit space}.
We re-encode a logical, binary vector back to a continuous representation
by an \emph{order-preserving}, monotonic function $m$, so that it preserves its original meaning.
We then perform the logical operation by the arithmetic addition to the continuous vector produced by the $\function{effect}$ network.
Finally we re-activate the result with a discrete activation (e.g. \GS/\function{BinConcrete}).
Formally, 
$\zafterrec=\aaed(\zbefore,\action)=\function{BinConcrete}(\effect{\action}+m(\zbefore))$.
$\function{BinConcrete}$ becomes $\function{step}$ after the training.
Notice that it guarantees the STRIPS-style effects
for the set of transitions $\{\ldots (\zbefore,\zafter) \ldots\}$ generated by the same action $\action[]$:
\begin{theo}
$\function{add}$: $\exists i;(\zbefore_f,\zafter_f)=(0,1)\Rightarrow\forall i;\zafter_f=1$.
$\function{del}$ (proof omitted):
$\exists i;(\zbefore_f,\zafter_f)=(1,0)\Rightarrow\forall i;\zafter_f=0$.
\end{theo}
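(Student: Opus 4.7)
The plan is to exploit the monotonicity of $m$ together with the fact that $\effect{\action}$ depends only on the action, not on the current state. After training, $\function{BinConcrete}$ collapses to $\function{step}$, so for a fixed action $\action[]$ and bit index $f$, every transition $(\zbefore,\zafter)$ generated by $\action[]$ obeys
\begin{equation*}
\zafter_f = \function{step}\bigl(e_f + m(\zbefore_f)\bigr),
\end{equation*}
where $e_f = \effect{\action}_f$ is a constant determined by $\action[]$ and $f$ alone. Since $m$ is order-preserving on $\{0,1\}$, we have $m(0) \leq m(1)$.

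First I would settle the add-effect case. Assume there exists some transition of $\action[]$ with $(\zbefore_f, \zafter_f) = (0,1)$; the firing of $\function{step}$ means $e_f + m(0) > 0$. For any other transition of the same $\action[]$, monotonicity gives $m(\zbefore_f) \geq m(0)$, hence
\begin{equation*}
e_f + m(\zbefore_f) \;\geq\; e_f + m(0) \;>\; 0,
\end{equation*}
which forces $\zafter_f = 1$ in every transition of $\action[]$. That is exactly the conclusion.

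The del-effect case is entirely symmetric: if some transition of $\action[]$ satisfies $(\zbefore_f, \zafter_f) = (1,0)$, then $e_f + m(1) \leq 0$, and monotonicity yields $e_f + m(\zbefore_f) \leq e_f + m(1) \leq 0$ for every other transition of $\action[]$, forcing $\zafter_f = 0$.

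There is no substantial obstacle, since the Back-to-Logit architecture was designed precisely so that this invariant would hold for free; the proof is really a one-line monotonicity argument. The one subtlety worth pinning down is the threshold convention of $\function{step}$ (strict on one side, weak on the other) applied consistently across the two cases, so that the add and del effect sets of a given action are automatically disjoint, matching the STRIPS requirement $\adde{a}\cap\dele{a}=\emptyset$.
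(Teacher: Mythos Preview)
Your proof is correct and follows essentially the same monotonicity argument as the paper: fix $e=\effect{\action[]}_f$, deduce $m(0)+e>0$ from the witnessing transition, and use $m(0)\leq m(1)$ to conclude $\function{step}(m(\zbefore_f)+e)=1$ for all $\zbefore_f\in\{0,1\}$. Your additional remark on the $\function{step}$ threshold convention ensuring $\adde{a}\cap\dele{a}=\emptyset$ is a valid observation that the paper does not spell out.
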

\begin{proof}
$\zafter_f=\function{step}(m(\zbefore_f)+e)$ where $e=\effect{\action[]}_f$.
From the assumption,
$1=\function{step}(m(0)+e)$, therefore $m(0)+e>0$.
Then $m(1)+e>0$ holds from monotonicity.
Therefore $\function{step}(m(\zbefore_f)+e) = 1$ regardless of $\zbefore_f$.
\end{proof}

This theorem allows us to extract the effects of each action $a$ 
directly from the encoded states by
$\adde{a}=\{f|\exists i;\ztr_f=(0,1)\}$ and
$\dele{a}=\{f|\exists i;\ztr_f=(1,0)\}$.

We found that an easy and successful way to implement $m$
is \emph{Batch Normalization} \cite{ioffe2015batch}, a method that was originally developed for
addressing the covariate shift in the deep neural networks.
During the batch training of the neural network,
Batch Normalization layer $\BN{\vx}$ takes a batched input of $B$ vectors $\braces{\vx^{i}\ldots \vx^{i+B}}$,
computes and maintains the element-wise mean and variance of the input $\vx$ across the batch dimension
(e.g. $\frac{1}{B}\sum_{k=i}^{k=i+B}\vx^{k}_j$ for the $j$-th element of $\vx$),
then shift and scale $\vx$ element-wise so that the result has a mean of 0 and a variance of 1.
We apply this to both the effect and the state vectors, i.e.,
$\aaed(\zbefore,\action)=\function{BinConcrete}(\BN{\effect{\action}}+\BN{\zbefore})$.

\paragraph{Precondition Learning.}

Thanks to the strong structural prior,
precondition learning with a baseline static bits extraction method
turned out to be enough to plan effectively:
$\function{pre}(a)=\{f|\forall i;\zbefore_f=1\}\cup \{\lnot f|\forall i; \zbefore_f=0\}$
for the $\zbefore$ satisfing $\aaee(\zbefore,\zafter)=a$.
This simple procedure, which learns a single decision-rule from the dataset,
achieves the sufficient success rate in the empirical evaluation.
Improving the accuracy of the precondition learning is a matter of ongoing investigation.

\section{Evaluation}

\begin{table*}[tb]
\centering
\begin{adjustbox}{width={\linewidth},keepaspectratio}
\begin{tabular}{|l|}
\\
Domain   \\
Hanoi    \\ 
LOut     \\
Twisted  \\ 
Mandrill \\ 
Mnist    \\ 
Spider   \\ 
\end{tabular}
\begin{tabular}{c|c|c|}
\multicolumn{3}{c|}{\emph{(1)} BtL Cube-Space AE} \\
Rec. & Succ. & Direct  \\
.001 & .002  & .001    \\ 
.000 & .000  & .000    \\ 
.000 & .001  & .000    \\ 
.000 & .001  & .001    \\ 
.000 & .000  & .000    \\ 
.001 & .001  & .001    \\ 
\end{tabular}
\begin{tabular}{r|r|r|}
\multicolumn{3}{c|}{\emph{(2)} Total loss}\\
MinMax  & Smooth &  \textbf{BtL} \\
.439    & .436   & \textbf{.003} \\ 
.506    & .506   & \textbf{.000} \\ 
.458    & .487   & \textbf{.001} \\ 
.500    & .600   & \textbf{.002} \\ 
.512    & .506   & \textbf{.001} \\ 
.607    & .563   & \textbf{.003} \\ 
\end{tabular}
\begin{tabular}{r|r|r|}
\multicolumn{3}{c|}{\emph{(3)} Ablation study }\\
 -$\function{BN}$ & -Direct & -Succ.  \\
 .019             & .088    & .501    \\ 
 .002             & .239    & .412    \\ 
 .002             & .154    & .498    \\ 
 .035             & .046    & .495    \\ 
 .002             & .158    & .462    \\ 
 .026             & .073    & .376    \\ 
\end{tabular}
\begin{tabular}{c|c|}
\multicolumn{2}{c|}{\emph{(4)} Direct loss under $A=300\to 100$} \\
\multicolumn{1}{c|}{Cube-Space AE} & \multicolumn{1}{c|}{SAE+Cube-AAE} \\
.001 $\to$ .001 & .001 $\to$ .002  \\ 
.001 $\to$ .008 & .010 $\to$ .040  \\ 
.000 $\to$ .002 & .001 $\to$ .011  \\ 
.001 $\to$ .002 & .000 $\to$ .006  \\ 
.000 $\to$ .001 & .002 $\to$ .005  \\ 
.003 $\to$ .002 & .001 $\to$ .003  \\ 
\end{tabular}
\end{adjustbox}
 \caption{
\emph{(1)}
The reconstruction loss for each output of the Cube-Space AE on the test set.
``Rec.'', ``Succ.'', ``Direct'' stands for
$||\xbefore-\xbeforerec||_2^2$, $||\xafter-\xafterrecrec||_2^2$, $||\zafter-\zafterrec||_1$ respectively.
We do not show $||\xafter-\xafterrec||_2^2$ because the results are similar to ``Rec.''.
``Succ.'' tends to be less accurate than ``Rec.'' because it is affected by the successor state prediction.
\emph{(2)}
Back-to-Logit (BtL) Cube-Space AE outperforms naive methods (\refig{fig:cube}, left) on the total loss.
\emph{(3)}
Ablation study of BtL Cube-Space AE on the total loss, showing that
Batch Normalization (-BN), Direct loss (-Direct), and the successor image loss (-Succ.) are all essential.
\emph{(4)}
Comparing the effect of the number of actions $A$ on the direct loss with the Cube-Space AE and the SAE + Cube-AAE.
The latter learns the state and the action representation separately with the same $\aaed$ as the Cube-Space AE.
When $A=100$, Cube-AAE tends to perform significantly worse than Cube-Space AE.
}
\label{bigtable}
\end{table*}

\begin{figure*}[tb]
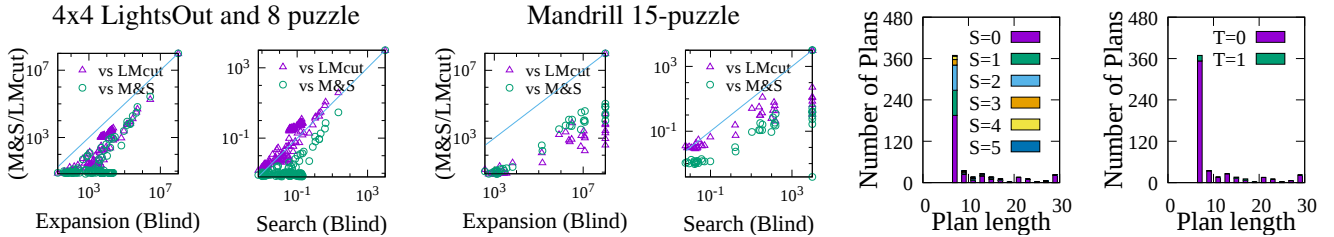

 \centering
\begin{adjustbox}{width={\linewidth},keepaspectratio}
\begin{minipage}[b]{0.30\linewidth}
 \centering
 4x4 LightsOut and 8 puzzle \\
 \includegraphics[width=0.48\linewidth]{img___static___problem-instances-expanded.pdf}
 \includegraphics[width=0.48\linewidth]{img___static___problem-instances-search.pdf}
\end{minipage}
\begin{minipage}[b]{0.30\linewidth}
 \centering
 Mandrill 15-puzzle \\
 \includegraphics[width=0.48\linewidth]{img___static___problem-instances-16-expanded.pdf}
 \includegraphics[width=0.48\linewidth]{img___static___problem-instances-16-search.pdf}
\end{minipage}
 \includegraphics[width=0.17\linewidth]{img___static___problem-instances-ood-histogram-states.pdf}
 \includegraphics[width=0.17\linewidth]{img___static___problem-instances-ood-histogram-transitions.pdf}
\end{adjustbox}
 \caption{
\emph{(Columns 1-2)}
Comparison of the number of node expansions and the search time [sec] between blind, $\mands$ and $\lmcut$.
Both \lsota admissible heuristics achieve the significant reduction in the node expansion.
The search time was not reduced with $\lmcut$ due to its high evaluation cost.
$\mands$ also incurs the high initialization cost.
\emph{(3-4)}
However, for 15 puzzles, this contributes to more instances solved in the resource limit (unsolved instances on the borders).
\emph{(5-6)}
Solutions that contain $S$ states / $T$ transitions in the training data. (Plans with length $>30$ are included in column 30.)
}
\label{runtime-comparison}
\label{ood-comparison}
\end{figure*}

\begin{table*}[tb]
\centering
\setlength{\tabcolsep}{0.4em}
\begin{adjustbox}{width={\linewidth},keepaspectratio}
\begin{tabular}{|rrr|rrr|rrr|rrr|}
\multicolumn{3}{|c|}{blind} & \multicolumn{3}{c|}{\gco}&\multicolumn{3}{c|}{blind} & \multicolumn{3}{c|}{\gco} \\
\textbf{f} & \textbf{v} & \textbf{o} &\textbf{f} & \textbf{v} & \textbf{o} &\textbf{f} & \textbf{v} & \textbf{o} &\textbf{f} & \textbf{v} & \textbf{o} \\
0  & 0  & 0  & 8  & 4  & 3           & 1  & 0  & 0  & 19 & 0  & 0  \\ 
0  & 0  & 0  & 4  & 0  & 0           & 0  & 0  & 0  & 14 & 2  & 2  \\ 
15 & 14 & 14 & 17 & 17 & \textbf{16} & 15 & 15 & 15 & 13 & 12 & 12 \\ 
0  & 0  & 0  & 0  & 0  & 0           & 0  & 0  & 0  & 0  & 0  & 0  \\ 
8  & 8  & 8  & 4  & 4  & 4           & 3  & 3  & 1  & 13 & 3  & 0  \\ \hline
\multicolumn{6}{|c|}{SAE+AAE+AD} & \multicolumn{6}{c|}{SAE+CubeAAE+AD} 
\end{tabular}
\begin{tabular}{c|}
\\
Domain   \\
LOut(30)     \\
Twisted(30)  \\ 
Mandrill(30) \\ 
Mnist(30)    \\ 
Spider(30)   \\ 
\hline
15puzzle(40) $\to$ \\
\end{tabular}
\begin{tabular}{rrr|rrr|rrr|rrr|rrr|}
\multicolumn{3}{c|}{blind} & \multicolumn{3}{c|}{\gco} & \multicolumn{3}{c|}{lama} & \multicolumn{3}{c|}{\lmcut} & \multicolumn{3}{c|}{\mands} \\
\textbf{f} & \textbf{v} & \textbf{o} &\textbf{f} & \textbf{v} & \textbf{o} &\textbf{f} & \textbf{v} & \textbf{o} &\textbf{f} & \textbf{v} & \textbf{o} &\textbf{f} & \textbf{v} & \textbf{o} \\
30 & 30 & 30 & 30 & 30 & 17 & 30 & 21 & 5  & 30 & 30          & 30          & 30 & 30          & 30          \\ 
30 & 25 & 25 & 30 & 26 & 5  & 30 & 20 & 4  & 30 & \textbf{29} & \textbf{29} & 30 & 25          & 25          \\ 
30 & 29 & 13 & 30 & 18 & 9  & 30 & 23 & 11 & 30 & 29          & 13          & 30 & \textbf{30} & 13          \\ 
30 & 25 & 24 & 30 & 4  & 4  & 30 & 7  & 7  & 30 & 25          & 24          & 30 & \textbf{26} & 24          \\ 
30 & 28 & 16 & 30 & 27 & 12 & 28 & 22 & 14 & 30 & 28          & 16          & 30 & 28          & 16          \\ \hline
32 & 32 & 29 & 28 & 9  & 9  & 38 & 13 & 3  & 38 & \textbf{38} & \textbf{33} & 38 & \textbf{38} & \textbf{33} \\ 
\end{tabular}
\end{adjustbox}
 \caption{
Plans f(ound), v(alid) and o(ptimal) 
by (left) AMA$_2$ and (right) Cube-Space AE.
Best result except $>2$ ties are in bold.
}
\label{planning}
\end{table*}

We evaluated our approach on the dataset used by \citeauthor{Asai2018}, which consists of 6 image-based domains.
\textbf{MNIST 8-Puzzle}
is a 42x42 pixel image-based version of the 8-Puzzle, where tiles contain hand-written digits (0-9) from the  MNIST database \cite{lecun1998gradient}.
Valid moves in this domain swap the ``0'' tile  with a neighboring tile, i.e., the ``0'' serves as the ``blank'' tile in the classic 8-Puzzle.
The \textbf{Scrambled Photograph 8-Puzzle (Mandrill, Spider)} cuts and scrambles real photographs, similar to the puzzles sold in stores.
\textbf{LightsOut} is           
a 36x36 pixel game where a 4x4 grid of lights is in some on/off configuration,
and pressing a light toggles its state as well as the states of its neighbors.
The goal is all lights Off.
\textbf{Twisted LightsOut} distorts the original LightsOut game image by a swirl effect
that increases the visual complexity.
\textbf{Hanoi} is a visual Tower of Hanoi with 9 disks and 3 towers in 108x32 pixels.
In all domains, we sampled 5000 transitions and
divided them into 90\%,5\%,5\% for the training, validation/tuning, and testing, respectively.
(Code: guicho271828/latplan@Github.)

We evaluate the proposed approach via an ablation study.
For each of the 6 datasets, we trained the proposed Cube-Space AE and its variants for 200 epochs, batch size 500,
GS temperature $\tau: 5.0 \to 0.7$ with exponential schedule,
with Rectified Adam optimizer \cite{liu2019variance}.
We evaluate the network with \emph{total loss},
which is the sum of the Mean Square Error (MSE) losses for the image outputs
($||\xbefore-\xbeforerec||_2^2+||\xafter-\xafterrec||_2^2+||\xafter-\xafterrecrec||_2^2$)
and the Mean Absolute Error (MAE) direct loss for successor state prediction ($||\zafter-\zafterrec||_1$).
During evaluation, Gumbel Softmax is treated as $\argmax$ without noise, following the previous work \cite{Asai2019a}.

We tuned the hyperparameters for the validation dataset using the sequential genetic algorithm in the public Latplan codebase
(population 20, mutation rate 0.3, uniform crossover and point mutation).
It selects from
the learning rate $r\in\braces{0.1,0.01,0.001}$,
the latent space size $F\in\braces{100,200,1000}$,
the number of actions $A\in\braces{100,200,400,800,1600}$,
layer width $W\in\braces{100,300,600}$ and depth $D\in\braces{1,2,3}$ of \function{action/apply},
the scaling parameter for
the Zero-Suppress loss  $\alpha\in\braces{0.1,0.2,0.5}$ \cite{Asai2019a},
the variational loss    $\beta\in\braces{-0.3,-0.1,0.0,0.1,0.3}$ \cite{higgins2017beta},
and the direct MAE loss $\gamma\in\braces{0.1,1,10}$,
and the bootstrap epoch for $\alpha,\gamma$: $d_{\alpha},d_{\gamma}\in\braces{10,20,40,60,100}$
(scaling parameters are set to 0 before it).
For each domain, we searched for 100 iterations ($\approx$15min/iter, 24 hours total) on a Tesla K80.

We first tested several alternative architectural considerations,
especially the effect of Back-to-Logit (BtL)
that combines the logit-level addition and batch normalization for implementing a logical operation in the latent space.
We considered 3 architectures that predict the successor state $\zafterrec$.
In \reftbl{bigtable}\emph{(2)},
``MinMax''/``Smooth'' are the variants discussed in (\refig{fig:cube}, left):
$\max(\min (\zbefore, 1-\function{del}(\va)), \function{add}(\va))$ and its smooth variants.
``BtL'' (\refig{fig:cube}, right: proposed approach) is
$\function{BinConcrete}(\function{BN}(\zbefore)+\function{BN}(\function{effect}(\va)))$.
BtL convincingly outperformed the other options.

Next we performed an ablation study. We tested
``-BN'', which does not use Batch Normalization,  
 ``-direct'', which relies solely on the image-based loss for predicting the successor,
and ``-Succ'', which similarly relies solely on the state-based direct loss.
\reftbl{bigtable}\emph{(3)} shows that
they fail to achieve the small total loss, indicating all components are essential.

Lastly, we tested if the end-to-end training helps obtaining the compact action representation.
We trained a ``Cube-AAE'', an AAE that has the same structure as the Cube-Space AE's $\aaed$,
but is trained on the fixed state dataset obtained by a SAE separately.
Cube-AAE must learn the STRIPS effects within the fixed state representation.
We tuned the hyperparameter (100 iterations) except $A=100,300$ (fixed).
As $A$ gets smaller, Cube-AAE performs significantly worse than Cube-Space AE (\reftbl{bigtable}\emph{(4)}), as expected.
This is because Cube-AAE cannot model the transitions with fewer actions
by reshaping the state representation to have more recurring effects,
which characterize the cube-like graph.

\subsection{Evaluation in the Latent Space}

We ran the off-the-shelf planner Fast Downward on the PDDL files generated by our system.
All domains have the fixed goal state $\goal$.
Initial states $\init$ are sampled from the frontier of a Dijkstra search from the goal at $g$-value $=l$,
the shortest path length.
This makes the task harder than the $l$-step random walks used in previous work which allow shorter paths.
In each domain, we generated 30 instances for $l=7$.
We tested
\astar with blind heuristics,
Goal count (\gco)~\cite{fikes1972strips},
\lmcut \cite{Helmert2009},
Merge-and-Shrink (M\&S) \cite{HelmertHHN14}, and
the first iteration of the satisficing planner LAMA \cite{richter2010lama}.
In order to make sure that we have the best representation,
we added 200 more genetic algorithm iterations.
We compared our approach with AMA$_2$ system (SAE+AAE+AD \cite{Asai2018}) with blind and \gco
(the only heuristics available in AMA$_2$).
Experiments are run 
with the 15 minutes time limit and 2GB memory limit.

\reftbl{planning} shows that our system outperforms AMA$_2$ overall.
The solutions \textbf{f}(ound) are \textbf{v}(alid) more often,
and \textbf{o}(ptimal) with admissible heuristics.
Note that, even with the admissible functions,
the optimality is guaranteed only with respect to
the potentially imperfect search graph generated by the Space AEs.
We also observed shorter runtime / fewer expansions
with the sophisticated heuristic functions like \lmcut/\mands (\refig{runtime-comparison}, \emph{column 1-2}),
which strongly indicates that the models
we are inferring contain similar properties as the human-designed PDDL models.
Also,
we additionally verified the generality of the representation learned by the system.
We compared the resulting plans generated by our system against the training dataset and
counted how often the states/transitions could be just ``memorized'' from training data.
\refig{ood-comparison} \emph{(col. 5-6)} shows that such states are rare, 
indicating that the planner is inducing new, unseen states/transitions.
In Hanoi, the plans tend to be invalid or the goal is unreachable, even for easy instances
($l=3$, $\textbf{v}=3$, $\textbf{f}=32$, 100 instances, blind).
Hanoi requires accurate preconditions because 
any pair of states have only a single shortest path.
Addressing its accuracy is left for future work.

Finally,
we tested Mandrill 15-puzzle, a significantly more challenging 4x4 variant of the sliding tile puzzle (\refig{15puzzle}).
We trained the network with more hyperparameter tuning iterations (300) and a larger training set (50000).
We generated $l=14,21$ instances (20 each) and ran the system (\reftbl{planning}, bottom right).
Blind search hits the memory limit on 6 instances which \mands, \lmcut solved successfully.
\refig{runtime-comparison} (\emph{col. 3-4}) confirms the overall reduction in the node expansion and the search time.
\textbf{To our knowledge, this is the first demonstration of the domain-independent heuristics in an automatically learned symbolic latent space.}

\section{Related Work}

\subsection{Traditional action learners}

Traditional action learners require the partially hand-crafted symbolic input
\cite{YangWJ07,CresswellMW13,aineto2018learning,zhuo2019learning,CresswellG11,MouraoZPS12,zhuo2013action,KonidarisKL14,KonidarisKL15,andersen2017active}
although the requirements of the systems may vary.

For example,
some systems require state sequences \cite{YangWJ07}
which contain a set of predicate symbols (such as \pddl{on}, \pddl{clear} for \pddl{blocksworld}),
while others require the sample action sequences \cite{CresswellG11,CresswellMW13,KonidarisKL14,KonidarisKL15,andersen2017active}
which contain a set of action symbols (such as \pddl{board}, \pddl{depart} in \pddl{miconic} domain,
or \pddl{go-left/right}, \pddl{up-ladder} in Treasure Game domain).
They may also require the object and type symbols, such as \pddl{block-A/B} and \pddl{block} in \pddl{blocksworld} domain.
Some supports the noisy input \cite{MouraoZPS12,zhuo2013action},
partial observations in a state
and/or missing state/actions in a plan trace \cite{aineto2018learning}, or
a disordered plan trace \cite{zhuo2019learning}.
Recently, \citeauthor{bonet2020learning} (\citeyear{bonet2020learning})
proposed a method that takes a state space topology -- that is, a non-factored state space graph
where each node is opaque and is represented by an ID, and each edge is assigned an action label/symbol
which originates from human labelling (e.g. \pddl{move}/\pddl{drop} in \pddl{gripper} domain).
Also, the state IDs represent the ``same-ness''/identities of the states already.
It is non-trivial to determine such an identifier in the noisy, high-dimensional continuous state space such as images.
Interestingly, however, the approach is able to generate predicate symbols and object symbols (and thus propositional symbols)
without human-provided labels.

Approach-wise, they can be grouped into 4 categories:
MAX-SAT based approach \cite{YangWJ07,zhuo2013action,zhuo2019learning},
Object-centric approach \cite{CresswellG11,CresswellMW13},
Learning-as-planning approach \cite{aineto2018learning},
and abstract-subgoal option approach \cite{KonidarisKL14,KonidarisKL15,andersen2017active}.

\textbf{MAX-SAT based approaches} try to find the most reasonable
planning operator definitions that matches the symbolic input observations,
by encoding the hypothesis space and the input dataset as a MAX-SAT problem \cite{vazirani2013approximation}.

\textbf{Object-centric approaches} builds a Finite State Machine
(FSM) of each object type from the action and objects in the input sequence,
assuming that lifting the objects into typed parameters will result in the same FSM.

\textbf{Learning-as-Planning} approach is similar to MAX-SAT approach,
but models the problem as classical planning problem of sequentially constructing
the action model which maximally meets the observations.

\textbf{Abstract Subgoal Option} approaches \cite{KonidarisKL14,KonidarisKL15,konidaris2018skills,andersen2017active}
developped a series of techniques for generating planning models,
from a set of state-option sequences $\Pi=\braces{(s_0,o_0,s_1,o_1\ldots)}$
obtained from the environment and the intermediate states collected during the continuous execution of options.
Each input state $s\in S$ is a vector value of a mixture of continuous and discrete variables $V$.
Due to the authors' main background (and audience) in MDP,
this line of work uses \emph{options} (macro action, or high-level action) instead of actions,
which allows the collection of state data during the execution of each option.
These intermediate states are then used as the negative examples in the training of the framework.

Since the input sequence contains human-assigned option symbols $o\in O$ (e.g. $o_0=\pddl{up-ladder}$),
it relies on symbolic input and thus
they do not address the action symbol grounding addressed in this paper and the previous Latplan.
Also, in typical robotic applications, MDPs as well as the experiments conducted in their work,
options (defined by initiation condition, termination condition, and a local policy) are hand-coded,
e.g., each condition over the low-level state vectors is written by a robot expert,
and are used for collecting data.
Moreover, Latplan framework does not rely on the negative examples.
As discussed in the original Latplan paper \cite{Asai2018},
ideally negative examples may include physically infeasible situtations which precludes the data collection,
thus explicitly relying on negative examples has a limitation.

For each option label, the framework tries to learn the effects and the preconditions.
Effects are represented as an \emph{abstract subgoal option},
which is a (\emph{mask}, \emph{subgoal option}) pair.
\emph{subgoal option} unconditionally sets values to all state variables,
while a \emph{mask} limits the variables to be affected.

There are several variations of the approach, but overall they work as follows, in the following order:
\textbf{(1)}
First, convert the set of sequences
$\Pi=\braces{(s_0,o_0,s_1,o_1\ldots)}$ into individual transitions
$T=\braces{(s_t,o_t,s_{t+1})}$.
\textbf{(2)}
Segment the transition dataset by each action label:
$T(o)=\braces{(s_t,o,s_{t+1})}$ for each label $o$
(e.g. $o=\pddl{up-ladder}$).  In the paper, further, the initiation set
$I(o)=\braces{s_t}$ and the effect set $E(o)=\braces{s_{t+1}}$
are obtained.
\textbf{(3)}
For each $o$, identify the list of state variables
$\textit{modify}(o)\subseteq V$ that are modified by $o$, by comparing
$s_t$ and $s_{t+1}$ in $T(o)$.
In some variations \cite{KonidarisKL15,andersen2017active},
this part could be learned by a probabilistic and/or bayesian model.
\textbf{(4)}
Segment the state variables $V$ into a family of mutually-independent
subsets $F=\braces{f\subseteq V}$ that covers $V$ ($V=\bigcup_F f$,
$i\not=j\Rightarrow f_i\cap f_j=\emptyset$), \emph{minimizing} the
number of such subsets ($|F|$), \emph{subject to} each subset $f$
uniquely maps to a set of actions that modifies it, i.e., for a mapping
$\textit{modifiedBy}: F\rightarrow O$, \textbf{(4.a)} $\forall o \in
\textit{modifiedBy}(f); f\subseteq \textit{modify}(o)$, and
\textbf{(4.b)} $i\not=j \Rightarrow \textit{modifiedBy}(f_i) \not=
\textit{modifiedBy}(f_j)$.  The subsets are called ``factors'', and
could also be called subspaces.
Each factor represents a set of variables which always change values at once.
\textbf{(5)}
Extract the state vectors in the entire dataset $\Pi$ as $\cal S$.
\textbf{(6)}
For each $f$, project $\cal S$ onto $f$, i.e., extract the axes of the state vectors
corresponding to $f$. Let's call the result ${\cal S}_f$.
\textbf{(7)}
Cluster each ${\cal S}_f$ into a finite set of $k$ clusters.
The number of clusters depends on the actual clustering method.
In a discrete environment, it is possible to enumerate all combination of values, where $k$ is the number of combinations.
The resulting clusters could be used either as a multi-valued representation in SAS+, or
as a set of propositional variables, which the author calls symbols.
Actions modify the state vector and therefore the clusters whose projections belong to.
Therefore, the cluster membership
behaves in a manner similar to a Finite State Machine or a Domain Transition Graph of each SAS+ variable.
\textbf{(8)}
Compute the effects from the effect set $E(o)$, its projection to each $f$ and its cluster membership.
Due to the abstract subgoal option assumption,
changes in the cluster membership can be directly encoded as the effect.
It could also use the intermediate states as the negative examples.
\textbf{(9)}
Compute the preconditions by
extracting the cluster membership shared by all states in the initiation set $I(o)$.
This is performed for each factor/subspace $f$, i.e., on the state dataset projected on $f$ and its clusters,
and the preconditions are added only if the cluster membership remains static in the projected states.
In some variations \cite{KonidarisKL15}, this step can be combined with a classifier that returns a probability,
trained with positive and negative examples.
In some variations \cite{KonidarisKL15,andersen2017active}, this step may include further clustering of options,
and the precondition of a single option may have a disjunction of multiple conditions.
\textbf{(10)}
Finally, the resulting representation is converted into PDDL.

Different versions of the approach rely on a variety of traditional machine learning model such as
C4.5 Decision Tree (in \cite{KonidarisKL14}),
Support Vector Machine and
Kernel density estimation (in \cite{KonidarisKL15}),
Bayesian sparse Dirichlet-categorical model and
Bayesian Hierarchical Clustering (in \cite{andersen2017active}),
or sparse Principal Component Analysis in the robotic task using point clouds (in \cite{konidaris2018skills}).
Therefore, they either work directly on the observed states, or linear latent variables.
The factoring process requires that the input state vector consists of disentangled/orthogonal subspaces,
which does not hold for noisy high-dimensional dataset like images, where most or all of input dimensions are entangled.

\subsection{Action Model Acquisition from Natural Language corpus}

There are several lines of work that extracts a PDDL action model from a
natural language corpus. Framer \cite{lindsay2017framer} uses a CoreNLP
language model while EASDRL \cite{feng2018extracting} uses Deep
Reinforcement Learning \cite{dqn}. The difference from our approach is
that they are reusing the symbols found in the corpus, i.e., \emph{parasitic} \cite{harnad1990symbol},
while we generate the discrete propositional symbols from the scratch, using only the visual perception 
which completely lacks such a predefined set of discrete symbols.

While some approaches extract an action model from a
natural language corpus \cite{lindsay2017framer,feng2018extracting},
they reuse the symbols in the corpus (referred to as \emph{parasitic} \cite{taddeo2005solving}).
Our approach requires only the visual perception 
and can generate the propositional/action symbols completely from scratch.

\subsection{Novelty-based planning without action description}

While there are recent efforts in handling the complex state space without
having the action description 
\cite{frances2017purely}, action models could be used for other purposes
such as Goal Recognition \cite{ramirez2009plan},
macro-actions \cite{BoteaB2015,ChrpaVM15},
or plan optimization \cite{chrpa2015exploiting}.

\subsection{Discrete modeling of the noisy input}

There are several lines of work that
learn the discrete representation of the raw environment.
Latplan SAE \cite{Asai2018} uses the Gumbel-Softmax VAE \cite{MaddisonMT17,jang2016categorical}
which was modified from the original to maximize the KL divergence term for the Bernoulli distribution \cite{Asai2019a}.
Causal InfoGAN \cite{kurutach2018learning} uses GAN\cite{goodfellow2014generative}-based approach
combined with Gumbel Softmax prior and Mutual Information prior.
Quantized Bottleneck Network \cite{koul2018learning} uses quantized activations (i.e., step functions)
with Straight-Through gradient estimator \cite{bengio2013estimating},
which enables the backpropagation through the step function.
There are more complex variations such as VQVAE \cite{van2017neural}, DVAE++\cite{vahdat2018dvae++}, DVAE\# \cite{vahdat2018dvae}.

\subsection{Other Related Work}

While Deep Reinforcement Learning has seen recent success in settings like
Atari \cite{dqn} and Go \cite{alphago},
it rely on the predefined action symbols in the simulator (e.g., levers, Fire button, grids).
Our work compliments it by learning a simulator of the environment.
While being similar to model-based RL, we generate both the transition function \emph{and the action space}.

Our BtL may resemble the residual technique repopularized by Resnet \cite{he2016deep},
which dates back to Kalman filters $s_{t+1}=X(s_t)+B(u_t)+w$
where $s,u,w$ are the state, controller and noise, and $X$ is sometimes an identity.
The key difference is that we operate in a propositional/discrete space.

Our framework works on a factored but non-structured propositional representation.
While we do not address the problem of lifting the action description,
combining these approaches with the FOL symbols (relations/predicates) found by NN \cite{Asai2019b}
using frameworks capable of learning a lifted condition, such as Neural Logic Machine \cite{dong2018neural},
is an interesting avenue for future work.

\section{Discussion and Conclusion}

In this work, we achieved a major milestone in
propositional Symbol Grounding and Action Model Acquisition:
A complete embodiment 
that converts the raw visual inputs
into a succinct symbolic transition model
in an automatic and unsupervised manner,
enabling the full potential of the modern heuristic search planners.
We accomplished this through the \emph{Cube-Space AutoEncoder} neural architecture
that jointly learns the discrete state representation and the descriptive action representation.
Our main contributions are the \emph{cube-like graph prior}
and its \emph{Back-to-Logit implementation}
that shape the state space to have a compact STRIPS model.

Furthermore, we demonstrated the first evidence 
of the scalability boost from the \lsota search heuristics
applied to the automatically learned state spaces.
These \domind functions, which have been a central focus of the planning community in the last two decades,
provide admissible guidance without learning. This is in contrast to
\emph{popular} reinforcement learning approaches that suffer from poor sample efficiency,
domain-dependence, and the complete lack of formal guarantees on admissibility.

Latplan requires uniform sampling from the environment,
which is nontrivial in many scenarios.
Automatic data collection via exploration is a major component of future work.

\end{document}